\def\HH{\mathcal{H}}
\def\H{\operatorname{H}}
\def\Vol{\operatorname{Vol}}
\def\PP{\mathbb{P}}
\def\QQ{\mathbb{Q}}
\def\gKL{gKL}
\DeclareMathOperator*{\supp}{supp}
\DeclareMathOperator*{\argmin}{arg\,min}
\def\dd{\,\mathrm{d}}
\newcommand{\Eqref}[1]{Eq.~\ref{#1}}
\newtheorem{thm}{Theorem}[section]
\newtheorem{defi}[thm]{Definition}
\title{Sentence length}
\date{}
\author{Gábor~Borbély\hspace*{40mm}  András~Kornai\\
         {\tt \{borbely,kornai\}@math.bme.hu}\\
        Department~of~Algebra, Budapest~University of Technology and Economics
        }
\begin{document}
\maketitle
\begin{abstract}
The distribution of sentence length in ordinary language is not well captured
by the existing models. Here we survey previous models of sentence length and
present our random walk model that offers both a better fit with the data and
a better understanding of the distribution. We develop a generalization of KL
divergence, discuss measuring the noise inherent in a corpus, and present a
hyperparameter-free Bayesian model comparison method that has strong
conceptual ties to Minimal Description Length modeling. The models we obtain 
require only a few dozen bits, orders of magnitude less than the naive
nonparametric MDL models would. 
\end{abstract}

\section{Introduction}

Traditionally, statistical properties of sentence length distribution were
investigated with the goal of settling disputed authorship
\citep{Mendenhall:1887,Yule:1939}. Simple models, such as a ``monkeys and
typewriters'' Bernoulli process \citep{Miller:1957} do not fit the data well,
and this problem is inherited from n-gram Markov to n-gram Hidden Markov
models, such as found in standard language modeling tools like SRILM
\citep{Stolcke:2011}. Today, length modeling is used more often as a
downstream task to probe the properties of sentence vectors
\citep{Adi:2017,Conneau:2018}, but the problem is highly relevant in other
settings as well, in particular for the current generation of LSTM/GRU-based
language models that generally use an ad hoc cutoff mechanism to regulate
sentence length.  The first modern study, interested in the entire shape of
the sentence-length distribution, is \citet{Sichel:1974}, who briefly
summarizes the earlier proposals, in particular negative binomial
\citep{Yule:1944}, and lognormal \citep{Williams:1944}, being rather critical
of the latter:

\begin{quote}
The lognormal model suggested by Williams and used by Wake must be rejected on
several grounds: In the first place the number of words in a sentence
constitutes a discrete variable whereas the lognormal distribution is
continuous. \citet{Wake:1957} has pointed out that most observed
log-sentence-length distributions display upper tails which tend towards zero
much faster than the corresponding normal distribution.  This is also evident
in most of the cumulative percentage frequency distributions of
sentence-lengths plotted on log-probability paper by Williams (1970). The
sweep of the curves drawn through the plotted observations is concave upwards
which means that we deal with sub-lognormal populations. In other words, most
of the observed sentence-length distributions, after logarithmic
transformation, are negatively skew.  Finally, a mathematical distribution
model which cannot fit real data --as shown up by the conventional $\chi^2$
test-- cannot claim serious attention. \cite[][p. 26]{Sichel:1974}
\end{quote}

Sichel's own model is a mixture of Poisson distributions given as
\begin{equation}\label{mix-poisson}
\phi(r)=\frac{\sqrt{1-\theta}^{\gamma}}{K_{\gamma}(\alpha\sqrt{1-\theta})}
\frac{(\alpha\theta/2)^r}{r!} K_{r+\gamma}(\alpha) 
\end{equation}
where $K_{\gamma}$ is the modified Bessel function of the second kind of order
$\gamma$. As Sichel notes, ``a number of known discrete distribution functions
such as the Poisson, negative binomial, geometric, Fisher's logarithmic series
in its original and modified forms, Yule, Good, Waring and Riemann
distributions are special or limiting forms of (1)''. While Sichel's own
proposal certainly cannot be faulted on the grounds enumerated above, it still
leaves something to be desired, in that the parameters $\alpha, \gamma,
\theta$ are not at all transparent, and the model lacks a clear genesis. In
Section~\ref{sec:model} of this article we present our own model aimed at
remedying these defects and in Section~\ref{sec:three} we analyze its
properties. Our results are presented is Section~\ref{sec:four}. The relation
between the sentence length model and grammatical theory is discussed in the
concluding Section~\ref{sec:five}.

\section{The random walk model}
\label{sec:model}
In the following Section we introduce our model of random walk(s).  The
predicted sentence length is basically the return time of these stochastic
processes, i.e. the probability of a given length is the probability of the
appropriate return time.  

Let $X_k$ be a random walk on $\mathbb{Z}$ and $X_k(t)$ the position of the
walk at time $t$. Let $X_k(0)=k$ be the initial condition. The walk is given
by the following parameters:
\begin{align}\label{walk}
    X_k(t+1) - X_k(t) &=
        \begin{cases}
        -1 & \text{with probability }p_{-1}\\
        0 & \text{with probability }p_{0}\\
        1 & \text{with probability }p_{1}\\
        2 & \text{with probability }p_{2}
        \end{cases}
    %&\text{independently of $t$ and $k$.}
\end{align}
The random walk is the sum of these independent steps. (2) is a simple model
of valency (dependency) tracking: at any given point we may introduce, with
probability $p_2$, some word with two open valences (e.g. a transitive verb),
with probability $p_1$ one that brings one new valence (e.g. an intransitive
verb or an adjective), with probability $p_0$ one that doesn't alter the count
of open valencies (e.g. an adverbial), and with probability $p_{-1}$ one that
fills an open valency, e.g. a proper noun. More subtle cases, where a single
word can fill more than one valency, as in Latin {\it accusativus cum
  infinitivo} or (arguably) English equi, are discussed in
Section~\ref{sec:five}. The return time is defined as

\begin{equation}
\tau_k = \min_{t\geq 0}\{t : X_k(t)=0\}
\end{equation}
\begin{figure}
\includegraphics[width=0.45\textwidth]{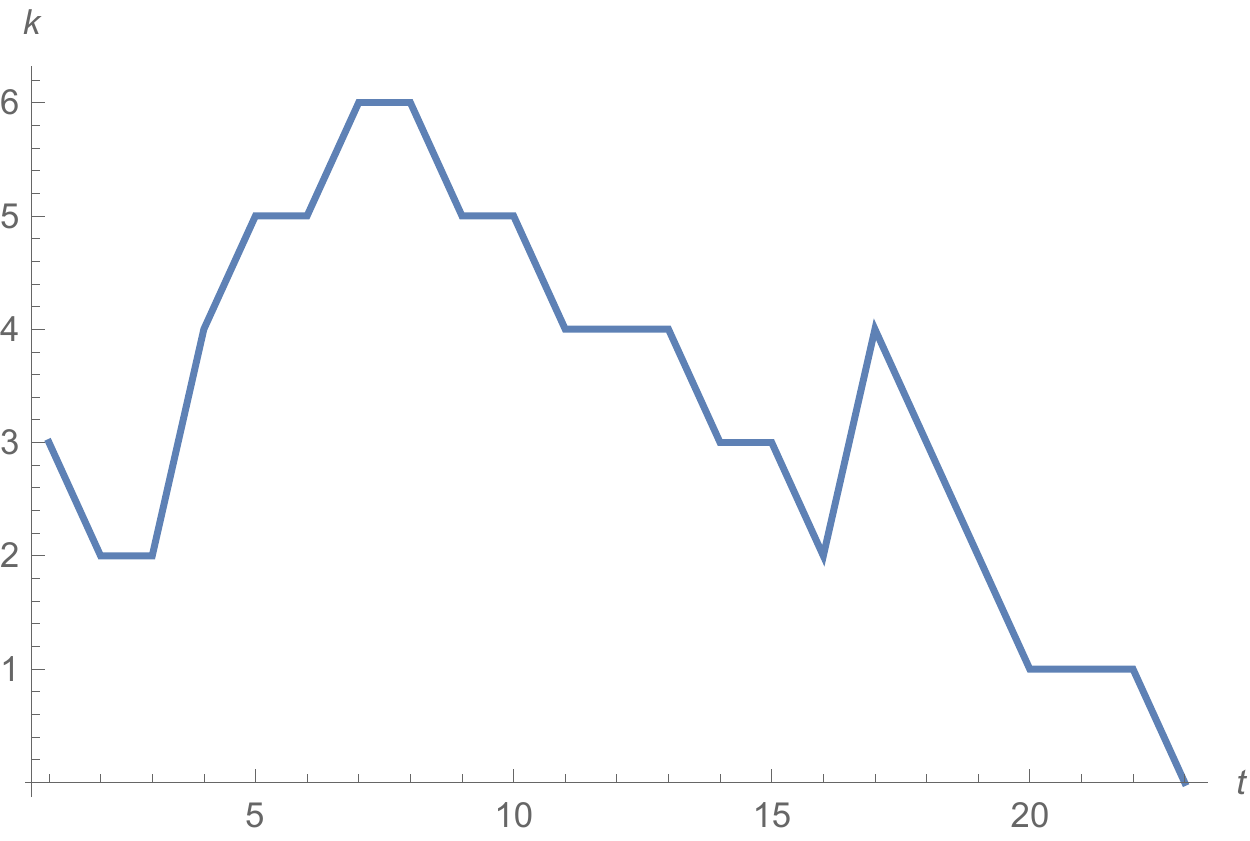}
\caption{Sentence length is modeled as the return time of a random walk.}
\end{figure}
In particular, $\tau_1$ is the time needed to go from $1 \to 0$.
We will calculate the probability-generating function to find the probabilities.
\begin{equation}
f(x) \coloneqq \mathbb{E} \left(x^{\tau_1}\right)
\end{equation}
The generating function of $\tau_k$ easily follows from $\tau_1$, since
$\tau_k$ is the sum of $k$ independent copies of $\tau_1$, so 
the generating function of $\tau_k$ is simply $f(x)^k$.
%In general, we will denote the coefficient of $x^n$ in a generating function $A(x)$ by 
% $[x^n]A(x)$, and compute it by $\frac{1}{n!}{\left.\frac{\dd^n}{\dd x^n}A(x)\right|}_{x=0}$

In order to calculate $f(x)$, we condition on the first step:
\begin{align}
f(x) = \ & p_{-1}\cdot x + & \text{finishing in one step} \nonumber\\
& p_0\cdot x \cdot f(x) + & \text{wait $\tau_1$ again} \nonumber\\
& p_1\cdot x \cdot f(x)^2 + & \text{wait $\tau_1$ two times}\nonumber\\
& p_2\cdot x \cdot f(x)^3 & \text{wait $\tau_1$ three times}
\label{condition_on_first_step}
\end{align}
Therefore, $f(x)$ is the solution of the following equation (solved for $f$ and $x$ is a parameter):
\begin{equation}
\label{momentum_generator}
p_{-1}\cdot x + (p_0\cdot x-1) \cdot f + p_1\cdot x \cdot f^2 + p_2\cdot x \cdot f^3 = 0
\end{equation}

This can be solved with Cardano's formula. The probabilities are given by
\begin{equation}
\PP(\tau_k=i) = [x^i] {f(x)}^k = \frac{1}{i!}{\left.\frac{\partial^i}{\partial x^i}{f(x)}^k\right|}_{x=0}
\label{probabilities}
\end{equation}

For given parameters $p_{-1}, p_0, p_1,p_2$ and $k$, and a given $i$,
one can evaluate these probabilities numerically,
but we need a bit more analytical form.
Let us define the followings.
\begin{align}
F(u) & = p_{-1} + p_0\cdot u+p_1\cdot u^2 +p_2 u^3\\
g(f) & = \frac{f}{F(f)}
\label{step_generator}
\end{align}
With these functions \Eqref{momentum_generator} becomes $x=g(f(x))$,
meaning that we are looking for the inverse function of $g$. 
One can see that $g(0)=0$ and $g'(0)=1/p_{-1}\neq 0$,
therefore we can apply the Lagrange inversion theorem.
Calculations detailed in the Appendix yield the following formula.
\begin{equation}
\PP(\tau_k=i) = \frac{k}{i}[u^{i-k}]F^i(u)
\label{lagrange_inversion}
\end{equation}

\noindent
Since $F$ is a polynomial one can calculate its powers by polynomial
multiplication and get $\PP(\tau_k=i)$ by looking up the appropriate
coefficient.  Here $k$ is an integer (discrete) model parameter and $p_{-1},
p_0, p_1, p_2$ are real (continuous) numbers.  That makes the above mentioned
probabilities \emph{differentiable} in the continuous parameters.

We call the parameter $k$, the starting point of the random walk, the total
valency.  Note that $\tau_k \geq k$ with probability $1$, therefore one cannot
model the sentences shorter then $k$.  To overcome this obstacle, we introduce
the mixture model that consists of several models with various $k$ values and
coefficients for convex linear combination.

\begin{equation}
\PP_{k_1, \alpha_1, k_2, \alpha_2, \ldots k_m, \alpha_m} (\tau = i) = 
\sum_{j=1}^m \alpha_j\cdot \PP(\tau_{k_j}=i)
\label{mixture_model}
\end{equation}
where the parameters $\alpha_j$ are mixture coefficients; positive and sum up to $1$.
Also every term in the mixture have different $p_{-1}, p_0, p_1$ and $p_2$ values (all positive and sum up to one).
In this way, we can model the sentences with length at least $\min_j k_j$.

\begin{figure}
\def\arraystretch{1.2}
\begin{tabular}{c|c|cccc|}
\cline{2-6}
 $k_1$ & $\alpha_1$ & $p^1_{-1}$ & $p^1_{0}$ & $p^1_{1}$ & $p^1_{2}$ \\\cline{3-6}
 $k_2$ & $\alpha_2$ & $p^2_{-1}$ & $p^2_{0}$ & $p^2_{1}$ & $p^2_{2}$ \\\cline{3-6}
\vdots & \vdots & \multicolumn{4}{c|}{\vdots} \\\cline{3-6}
 $k_m$ & $\alpha_m$ & $p^m_{-1}$ & $p^m_{0}$ & $p^m_{1}$ & $p^m_{2}$\\
 \cline{2-6}
\end{tabular}
\caption{Model parameters. The framed parameters are real, positive numbers and should sum up to $1$.}
\label{parameters}
\end{figure}

Theoretically, there is no obstacle to have different number of $p$ values to different $k$ values.
The model can be a mixture of random walks, where the individual processes can have different upward steps.
%Aside that $p_{-1}$ and $p_0$ should be positive, the higher steps can be non-uniform.
%But we did not investigate that possibility.

\section{Model analysis}
\label{sec:three}
Here we introduce and analyze the experimental setup that we will use in
Section~\ref{sec:four} to fit our model to various datasets.  The raw data is
a set of positive integers, the sentence lengths, and their corresponding
weights (absolute frequencies) $\{n_x\}_{x\in X}$. We call
$n\coloneqq\sum_{x\in X}n_x$ the {\it size} and $X$ the {\it support} of the
data. Since the model is differentiable in the continuous parameters
(including the mixing coefficients), the direct approach would be to perform
gradient descent on the dissimilarity as an objective function to find the
parameters.  With fixed valency parameters $k_j$ this is a constrained
optimization task $\operatorname{dist}(\PP_\text{sample},
\PP_\text{modeled})\to\min$.

In some cases, especially for smaller datasets, we might find it expedient to
bin the data, for example \cite{Adi:2017} use bins (5-8), (9-12), (13-16),
(17-20), (21-25), (26-29), (30-33), and (34-70). On empirical data (for
English we will use the BNC\footnote{\url{http://www.natcorp.ox.ac.uk}} and
the UMBC
Webbase\footnote{\url{https://ebiquity.umbc.edu/resource/html/id/351}} and for
other languages the SZTAKI
corpus\footnote{\url{http://hlt.sztaki.hu/resources/webcorpora.html}}) this
particular binning leaves a lot to be desired. We discuss this matter in
Section~\ref{3.1}, together with the choice of dissimilarity (figure of
merit).  An important consideration is that a high number of mixture
components fit the data better but have more model parameters -- this is
discussed in \ref{3.2}.

\subsection{Short utterances}\label{3.1}

Short utterances such as imperatives {\it Stop!} or {\it Help} are common both
in spoken corpora and in written materials, both in fiction, where incomplete
sentences abound, especially in dialog intended to sound natural, and in
nonfiction, where they are encountered often in titles, subtitles, and
itemized lists. The prevailing tokenization convention, where punctuation is
counted as equivalent to a full word, also has an effect on the distribution,
more perceptible at the low end.

Since the eight bins used by \cite{Adi:2017} actually ignore the very low
(1-4) and very high (71+) ranges of the data, we will use ordinary deciles,
setting the ten bins as the data dictates. In this regard, it is worth noting
that in the 18 non-English corpora used in this study the neglected low bin
contains on the average 17.4\% of the data (variance 6.3\%, low 8.1\% on
Romanian, high 33.7\% on Serbian\_sr). Besides tokenization, perhaps the most
important factor is morphological complexity, since in highly agglutinating
languages a single word is sufficient for what would require a multiword
sentence in English, as in Hungarian {\it elvihetlek} `I can give you a ride'.

At the high end (sentences with 71 or more words) the original binning omits
on the average 4.3\% of the data (variance 3.1\%, low 1.2\% Nynorsk, high
15.1\% Serbian\_sr). Comparable figures for English are 3.7\% (UMBC) and 14.4\%
(BNC) for the low bin, 1.0\% (UMBC) and 0.8\% (BNC) for the high bin. The last
column of Table \ref{datasets} shows the length of the longest sentence in each of the
subcorpora considered. Since the number of datapoints is high, ranging from
1.3m (Nynorsk) to 136.6m (UMBC), the conventional $\chi^2$ test does not
provide a good figure of merit on the original data (no fit is ever
significant, especially as there is a lot of variation at the high end where 
only few lengths are extant), nor on the binned data, where every fit is 
highly significant. 

\begin{table}[H]{\centering
\hspace*{-15mm}\begin{tabular}{cl|r|c|crr}
 & \multirow{2}{*}{dataset} & \multirow{2}{*}{\makecell{number of\\sentences}} & \multirow{2}{*}{\makecell{tolerance\\(in nats)}} & \hspace*{-11mm}mean\hspace*{-15mm} & \hspace*{-10mm}$99.9\%$\hspace*{-6mm} & max \\
 &                          &                                      &                                      & \multicolumn{3}{c}{sentence length} \\ \toprule
\multirow{10}{*}{\rotatebox[origin=c]{90}{BNC-2.0 (English)}} & 
  BNC-A &  753442 & 5.669e-4 & 20.967 & 97 & 555 \\ \cline{2-7}
& BNC-B &  362003 & 9.051e-4 & 20.650 & 96 & 365 \\ \cline{2-7}
& BNC-C &  955486 & 3.636e-4 & 20.524 & 102 & 491 \\ \cline{2-7}
& BNC-D &    6138 & 4.033e-2 & 16.366 & 228 & 466 \\ \cline{2-7}
& BNC-E &  337370 & 7.640e-4 & 22.219 & 106 & 763 \\ \cline{2-7}
& BNC-F &  527758 & 1.031e-3 & 19.351 & 130 & 2208 \\ \cline{2-7}
& BNC-G &  478860 & 7.835e-4 & 18.753 & 106 & 435 \\ \cline{2-7}
& BNC-H & 1185549 & 4.482e-4 & 18.841 & 118 & 950 \\ \cline{2-7}
& BNC-J &  359352 & 1.446e-3 & 18.666 & 156 & 1100 \\ \cline{2-7}
& BNC-K & 1086242 & 5.039e-4 & 12.784 & 116 & 918 \\ \midrule
& UMBC & 136630947 & 2.442e-3 & 24.434 & 116 & 3052 \\ \midrule
\multirow{18}{*}{\rotatebox[origin=c]{90}{SZTAKI corpus}}
& Catalan    &  23927377 & 1.751e-3 & 27.496 & 384 & 5279 \\ \cline{2-7}
& Croatian   &  62196524 & 5.616e-3 & 23.975 & 369 & 8598 \\ \cline{2-7}
& Czech      &  30382696 & 5.147e-3 & 20.139 & 285 & 6081 \\ \cline{2-7}
& Danish     &  26687240 & 7.557e-3 & 18.593 & 296 & 16425 \\ \cline{2-7}
& Dutch      & 103958658 & 2.408e-3 & 19.135 & 296 & 16128 \\ \cline{2-7}
& Finnish    &  58104101 & 1.946e-3 & 15.538 & 237 & 5552 \\ \cline{2-7}
& Indonesian &  13095607 & 1.231e-2 & 23.675 & 343 & 22762 \\ \cline{2-7}
& Lithuanian &  81826291 & 1.184e-3 & 17.170 & 294 & 21857 \\ \cline{2-7}
& Bokm\aa l  &  84375397 & 3.564e-3 & 19.199 & 281 & 14032 \\ \cline{2-7}
& Nynorsk    & 1393312 & 3.946e-3 & 18.836 & 175 & 1591 \\ \cline{2-7}
& Polish     &  72983880 & 8.508e-3 & 19.549 & 396 & 24353 \\ \cline{2-7}
& Portuguese &  37953728 & 4.973e-2 & 25.365 & 448 & 9614 \\ \cline{2-7}
& Romanian   &  36211510 & 2.338e-2 & 29.466 & 473 & 54434 \\ \cline{2-7}
& Serbian.sh &  35606837 & 4.531e-3 & 23.744 & 332 & 6800 \\ \cline{2-7}
& Serbian.sr &   2023815 & 7.189e-3 & 37.736 & 862 & 6800 \\ \cline{2-7}
& Slovak     &  39633566 & 2.572e-3 & 21.759 & 402 & 24571 \\ \cline{2-7}
& Spanish    &  47673229 & 8.365e-4 & 29.305 & 471 & 29183 \\ \cline{2-7}
& Swedish    &  54218846 & 2.526e-3 & 16.468 & 315 & 8127
\end{tabular}
\caption{Sentence length datasets. For tolerance see \ref{3.2}}
\label{datasets}
% https://tex.stackexchange.com/a/43694
}\end{table}

A better choice is the Kullback--Leibler divergence, but this still suffers
from problems when the supports of the distributions do not coincide. In our
case we have this problem both at the low end, where the model predicts
$\PP(\tau = i) = 0$ for $i<k$, and at the high end, where we predict positive
(albeit astronomically small) probabilities of arbitrarily long sentences.  To
remedy this defect, we define generalized KL divergence, $gKL$, as follows.

\begin{defi}[Motivated by Theorem \ref{main_theorem}.]
\label{generalized_kl_def}
Let $\PP$ and $\QQ$ be probability measures over the same measurable space $(X, \Sigma)$ that are both
absolutely continuous with respect to a third measure $\dd x$, and let
$\lambda$ be $\PP(\supp(\PP)\cap\supp(\QQ))$. Then
\begin{equation}
\begin{split}
\gKL(\PP, \QQ) & \coloneqq 
-\lambda\cdot \ln\lambda + \\
& \ \int\limits_{\supp(\PP) \cap \supp(\QQ)} \PP(x) \cdot\ln \frac{\PP(x)}{\QQ(x)} \dd x
\end{split}
\end{equation}
%where 
%\begin{align*}
%& \supp(\PP) = \{x\in X \mid \PP(x)>0\}, \quad \supp(\QQ) = \{x\in X \mid \QQ(x)>0\} \\
%& 
%\end{align*}
\end{defi}

Clearly, $\gKL$ reduces to the usual KL divergence if the support of the
distributions coincide. Perhaps the high end of the distribution could be
ignored, at least for English, at the price of losing 1\% of the data, but
ignoring the short sentences, 14.4\% of the BNC, is hard to countenance. As a
practical matter this means we need to bring in mixture components with total
valency $k < 4$, and these each bring 4 parameters (the mixture weight and 3
$p_i$ values) in tow. Obviously, the more components we use, the better the
fit will be, so we need to control the trade-off between these.  In
Section~\ref{3.2} we introduce a method derived from Bayesian model comparison
\cite{MacKay:2003} that will remedy the zero modeled probabilities and answer
the model complexity trade-off.

\subsection{Bayesian model comparison}\label{3.2}

If a dataset $D$ has support $X$, with $n_x> 0$ being the number that length
$x$ occurred, the data size is $|D| = \sum_{x\in X} n_x$ and the observed
probabilities are $p_x \coloneqq \frac{n_x}{|D|}$. Let $\HH_i$ be
$i^\text{th}$ model in some list of models. Each model is represented by a
parameter vector $\mathbf{w}$ in a (continuous) parameter space, and
$\supp\HH_i = \{x\mid \PP(x\mid \HH_i)>0 \}$ is not necessarily equal to
$X$. Clearly, different $\HH_i$ may have different support, but a given model
has the same support for every $\mathbf{w}$. Model predictions are given by 
$\QQ_{\mathbf{w}_i}(x) \coloneqq \PP(x\mid \mathbf{w}_i)$, and the 
\textbf{evidence} the $i^\text{th}$ model has is 

\begin{equation}
\PP(\HH_i\mid D) = \frac{\PP(D\mid\HH_i)\cdot \PP(\HH_i)}{\PP(D)}
\end{equation}

\noindent
If one supposes that no model is preferred over any other models ($\PP(\HH_i)$ is constant)
then the decision simplifies to finding the model that maximizes
\begin{equation}
\PP(D\mid \HH_i) = \int_{\HH_i} \PP(D\mid \mathbf{w}_i, \HH_i)\cdot \PP(\mathbf{w}_i \mid \HH_i)\dd \mathbf{w}_i
\end{equation}

\noindent
We make sure that no model parameter is preferred by setting a uniform prior:
\begin{equation}
\PP(\mathbf{w}_i \mid \HH_i) = 1/\left(\int_{\HH_i} 1\dd \mathbf{w}_i\right)  =1/{\Vol(\HH_i)}
\end{equation}

\noindent
We estimated this integral with Laplace's method by introducing
$f(\mathbf{w}_i) \coloneqq -\frac{1}{|D|}\ln \QQ_{\mathbf{w}_i}(D)$, i.e. the
cross entropy (measured in nats).
\begin{align}
\PP(D\mid \mathbf{w}_i, \HH_i) & = \prod_{x\in X}\QQ_{\mathbf{w}_i}(x)^{n_x} \nonumber
\\
f(\mathbf{w}_i) & = -\sum_{x\in X} p_x \cdot \ln \QQ_{\mathbf{w}_i}(x)
\end{align}

\noindent
Taking $-\frac{1}{|D|}\ln(\bullet)$ of the evidence amounts to minimizing in
$i$ the following quantity:
\begin{gather}
\label{non_fisher_overall}
    f(\mathbf{w}^\ast_i) + 
    \frac{1}{|D|}\cdot\ln\Vol(\HH_i) + \\
    \nonumber
    \frac{1}{2|D|} \ln\det f''(\mathbf{w}^\ast_i) + 
    \frac{d}{2|D|} \cdot \ln\frac{|D|}{2\pi}
\end{gather}
where $d$ is the dimension of $\HH_i$ (number of parameters), $f''$ is the
Hessian and $\mathbf{w}^\ast_i = \argmin_{\mathbf{w}_i\in\HH_i}
f(\mathbf{w}_i)$ for a given $i$.  Since the theoretical optimum of
$f(\mathbf{w}_i)$ is the entropy of the data ($\ln 2\cdot\H(D)$), we subtract
this quantity from \Eqref{non_fisher_overall} so that the term
$f(\mathbf{w}_i)$ becomes the relative entropy (measured in nats) with a
theoretical minimum of 0. 

We introduce an augmented model to deal with the datapoints where $\QQ_{\mathbf{w}_i}(x) = 0$.
\begin{equation}
\label{augmented_model}
\overline{\QQ}_{\mathbf{w}_i,\mathbf{q}}(x) \coloneqq \begin{cases}
\lambda\QQ_{\mathbf{w}_i}(x) & \text{if } \QQ_{\mathbf{w}_i}(x) > 0 \\
(1-\lambda) q_x & \text{if $n_x>0, \QQ_{\mathbf{w}_i}(x)=0$}
\end{cases}
\end{equation}
where
\begin{align*}
\lambda & = \sum_{x\in X} p_x & \text{covered probability}\\
1-\lambda & = \sum_{x\in X\setminus \supp(\HH_i)} p_x & \text{uncovered probability}
\end{align*}

The newly introduced model parameters $\mathbf{q} = (q_x)_{x\in X\setminus
  \supp(\HH_i)}$ are also constrained: they have to be positive and sum up to
one, i.e. inside the probability simplex.  After finding the optimum of
$\mathbf{q}$ and modifying \Eqref{non_fisher_overall} with the auxiliary terms
and subtracting the entropy of the data ($\ln 2\cdot \H(D)$) as discussed
above, one gets:

\begin{align}
\nonumber
& -\lambda\cdot \ln \lambda + \sum_{x\in X\cap\supp(\HH_i)}
        p_x\cdot \ln\frac{p_x}{\QQ_{\mathbf{w}^\ast_i}(x)} + \\
\nonumber
& \frac{1}{|D|} \cdot \left( \ln\Vol(\HH_i)+\ln\Vol(\text{aux. model})\right) + \\
\nonumber
& \frac{1}{2|D|} \cdot \ln\det\left(\text{model Hessian}\right) + \\
\nonumber
& \frac{1}{2|D|} \cdot \ln\det\left(\text{aux. model Hessian}\right) + \\
& \frac{d'}{2|D|} \cdot \ln\frac{|D|}{2\pi}
\label{augmented_evidence}
\end{align}
where $d'$ is the original model dimension plus the auxiliary model
dimension. For sufficiently large corpora ($|D| \rightarrow \infty$) all but
the first term will be negligible, meaning that the most precise model (in
terms of $\gKL$ divergence) wins regardless of model size. One way out would
be to choose an `optimum corpus size' \citep{Zipf:1949}, a move that has
already drawn strong criticism in \citet{Powers:1998} and one that would
amount to little more than the addition of an extra hyperparameter to be set
heuristically.

Another, more data-driven approach is based on the observation that corpora
have {\it inherent noise}, measurable as the KL divergence between a random
subcorpus and its complement \citep{Kornai:2013b} both about the same size
(half the original). Here we need to take into account the fact that large 
sentence lengths appear with frequency 1 or 0, so subcorpora $D_1$ and
$D_2=D\setminus D_1$ will not have the exact same support as the original, and
we need to use symmetrized gKL: the \textbf{inherent noise} $\delta_D$ of a corpus
$D$ is $\frac{1}{2}(\gKL(D_1,D_2) + \gKL(D_2, D_1))$, where $D_1$ and $D_2$ are
equal size subsets of the original corpus $D$, and the \gKL\ divergence is
measured on their empirical distributions.  

$\delta_D$ is largely independent of the choice of subsets $D_1, D_2$ of the
original corpus, and can be easily estimated by randomly sampled $D_i$s. To
the extent crawl data and classical corpora are sequentially structured
\citep{Curran:2002}, we sometimes obtain different noise estimates based on
random $D_i$ than from comparing the first to the second half of a corpus, the
procedure we followed here.  In the Minimum Description Length (MDL) setting
where this notion was originally developed it is obvious that we need not
approximate corpora to a precision better than $\delta$, but in the Bayesian
setup we use here matters are a bit more complicated.

\begin{defi}
For $\delta > 0$ let
\begin{equation}\label{tolerable}
\gKL_\delta(\PP, \QQ) \coloneqq  \max(0, \gKL(\PP, \QQ)-\delta)
\end{equation}
For a sample $\PP$ with inherent noise $\delta$,
a model $\QQ$ is called \textbf{tolerable} if $\gKL_\delta(\PP, \QQ) = 0$
\end{defi}

If $\gKL_\delta$ is used instead of $\gKL$ in \Eqref{augmented_evidence}
then model size $d$ becomes important.  If a model
fits within $\delta$ then the first term becomes zero and for large $|D|$
values the number of model parameters (including auxiliary parameters) will
dominate the evidence. The limiting behavior of our evidence formula, with
tolerance for inherent noise, is determined by the following observations:

\begin{enumerate}
\item Any tolerable model beats any non-tolerable one.
\item If two models are both tolerable and have different number of model parameters (including auxiliary model),
then the one with the fewer parameters wins.
\item If two models are both tolerable and have the same number of parameters,
then the model volume and Hessian decides.
\end{enumerate}

An interesting case is when no model can reach the inherent noise -- in this
case we recover the original situation where the best fit wins, no matter the
model size.

\section{Results}\label{sec:four}

A single model $\HH_i$ fit to some dataset is identified by its {\it order},
defined as the number of upward steps the random walk can take at once: $1, 2$
or $3$, marked by the number before the first decimal; and its {\it mixture}, a
non-empty subset of $\{1,2,3,4,5\}$ that can appear as $k$: valency of a
single component (for example {\bf k1.2.4} marks $\{1,2,4\}\subseteq
\{1,2,3,4,5\}$).  Altogether, we trained $3\times31 = 93$ locally optimal
models for each dataset and compared them with \Eqref{augmented_evidence},
except that $gKL_\delta$ is used with the appropriate tolerance.

We computed $\mathbf{w}^\ast$ with a (non-batched) gradient descent
algorithm.\footnote{You can find all of our code used for training and
  evaluating at \url{https://github.com/hlt-bme-hu/SentenceLength}} We used
Adagrad with initial learning rate $\eta=0.9$, starting from uniform $p$ and
$\alpha$ values, and iterated until every coordinate of the gradient fell
within $\pm10^{-3}$. The gradient descent typically took $10^2-10^3$
iterations to reach a plateau, but about .1\% of the models were more sensitive and
required a smaller learning rate $\eta=0.1$ with more (10k) iterations.

\subsection{Validation}

The model comparison methodology was first tested on artificially generated
data.  We generated 1M+1M samples of pseudo-random walks with parameters:
$p_{-1} = 0.5, p_0 = p_1 = 0.25$ (at most one step upward) and $k = 3$ (no
mixture) and obtained the inherent noise and length distribution.  The
inherent noise was about 3.442e-4 nats. We trained all 93 models and compared
them as described above.  

The validation data size is $2\cdot 10^6$ but we also replaced $|D|$ with a
hyper-parameter $n$ in \Eqref{augmented_evidence}.  This means that we
faked the sample to be bigger (or smaller) with the same empirical
distribution.  We did this with the goal of imitating the `optimum corpus
size' as an adverse effect.

As seen on Table \ref{test_results} the true model wins.  We also tested the
case when the true model was simply excluded from the competing models. In
this case, the tolerance is needed to ensure a stable result as $n\to\infty$.

\begin{table}[H]{\centering
\setlength\tabcolsep{1pt}
\begin{tabular}{l*{6}{|l}}
 \multirow{2}{*}{1.k3 artificial data} & \multicolumn{6}{c}{best parameters for various $n$ values} \\
 & \multicolumn{1}{c|}{1k} & \multicolumn{1}{c|}{10k} & \multicolumn{1}{c|}{100k} & \multicolumn{1}{c|}{1M} & \multicolumn{1}{c|}{10M} & \multicolumn{1}{c}{1G} \\\toprule
with tolerance & 3.k1-5 &1.k3  & 1.k3 & 1.k3 & 1.k3 & 1.k3 \\\hline
w/o tolerance & 3.k1-5 & 1.k3 & 1.k3 & 1.k3 & 1.k3 & 1.k3 \\\midrule
w tolerance, -true & 3.k1-5 & 2.k4 & 2.k4 & 2.k4 & 2.k4 & 2.k4\\\hline
w/o tolerance, -true & 3.k1-5 & 2.k4 & 1.k2.3 & 1.k2.3 & 1.k2.3 & 1.k3-5
\end{tabular}
\caption{Optimal models for artificially generated data (1.k3) for various $n$ values.}
\label{test_results}
}\end{table}

As there are strong conceptual similarities between MDL methods and the
Bayesian approach \citep{MacKay:2003}, we also compared the models with MDL,
using the same locally optimal parameters as before, but encoding them in
bits.  To this end we used a technique from \cite{Kornai:2013b} where all of
the continuous model parameters are discretized on a log scale unless the
discretization error exceeds the tolerance.  The model with the least number
of bits required wins if it fits within tolerance.  (The constraints are
hard-coded in this model, meaning that we re-normalized the parameters after
the discretization.) In the artificial test example, the model 1.k3 wins,
which is also the winner of the Bayesian comparison. If the true model is
excluded, the winner is 1.k2.3.

\subsection{Empirical data}

Let us now turn to the natural language corpora summarized in
Table~\ref{datasets}. Not only are the webcrawl datasets larger than the BNC
sections, but they are somewhat noisier and have suspiciously long sentences.
To ease the computation, we excluded sentences longer than $1,000$
tokens. This cutoff is always well above the $99.9^\text{th}$ percentile
given in the next to last column of Table~\ref{datasets}. The results,
summarized in Table~\ref{optimal_w_tol}, show several major tendencies.

First, most of the models (115 out of 145,\phantom{i} 79.7\%) fit sentence length of the
entire subcorpus better than the empirical distribution of the first half
would fit the distribution of the second half. When this criterion is {\it not} met
for the best model, i.e. the \gKL \  distance of the model from the data is above
the internal noise, the ill-fitting model form is shown in {\it italics}.

Second, this phenomenon of not achieving tolerable fit is seen primarily (20
out of 30) in the first column of Table~\ref{optimal_w_tol}, corresponding to
a radically undersampled condition $n=1,000$, and (9 out of 30) to a somewhat
undersampled condition $n=10,000$. Only on the largest corpus (UMBC, 136.6m
sentences) do we see the phenomenon persist to $n=100,000$, but even there, by
setting $n=10^6$, still less than 1\% of the actual dataset size, we get a
good fit, within two thirds of the inherent noise.

Third, and perhaps most important, for sufficiently large $n$ the Bayesian
model comparison technique we advocate here actually selects rather simple
models, with order 1 (no ditransitives, a matter we return to in
Section~\ref{sec:five}) and only one or two mixture components. We emphasize
that `sufficiently large' is still in the realistic range, one does not have
to take the limit $n \rightarrow \infty$ to obtain the correct model. The last
two columns (gigadata and infinity) always coincide, and in 21 of the 29
corpora the 1M column already yield the same result.

Given that tolerance is generally small, less than 0.8 bits even in our
noisiest corpus (Portuguese), we didn't expect much change if we perform the
model comparison without using Eq.~\ref{tolerable}. Unsurprisingly, if we
reward every tiny improvement in divergence, we get more models (123 out of
145, 84.8\%) within the tolerable range -- those outside the tolerance limit
are again given in italics in Table~\ref{optimal_wo_tol}.  But we pay a heavy
price in model complexity: the best models (in the last two columns) are now
often second order, and we have to countenance a hyperparameter $n$ which
matters (e.g. for Swedish).

\begin{table}[H]{\centering
\setlength\tabcolsep{1pt}
\hspace*{-3mm}\begin{tabular}{l*{6}{|l}}
 \multirow{2}{*}{dataset} & \multicolumn{6}{c}{best parameters for various $n$ values} \\
 & \multicolumn{1}{c|}{1k} & \multicolumn{1}{c|}{10k} & \multicolumn{1}{c|}{100k} & \multicolumn{1}{c|}{1M} & \multicolumn{1}{c}{1G} & \multicolumn{1}{c}{$\infty$} \\\toprule
BNC-A & {\it 3.k1-5} & {\it 3.k2-5} &  1.k4.5 & 1.k4.5 & 1.k4.5 & 1.k4.5 \\\hline
BNC-B & {\it 3.k1-5} & 3.k1.3.5 & 3.k1.3.5 & 1.k1.5 & 1.k1.5 & 1.k1.5 \\\hline
BNC-C & 3.k2-5 & 3.k2-5 & 3.k2-5 & 3.k2-5 & 1.k1.3 & 1.k1.3 \\\hline
BNC-D & 3.k2.3.5 & 3.k2.3.5 & 3.k2.3.5 & 3.k2.3.5 & 1.k2 & 1.k2 \\\hline
BNC-E & {\it 3.k1.3-5} & 3.k2-5 & 3.k2-5 & 3.k2-5 & 1.k2.5 & 1.k2.5 \\\hline
BNC-F & {\it 3.k2-5} & 3.k1-5 & 3.k3-5 & 3.k3-5 & 1.k3 & 1.k3 \\\hline
BNC-G & 3.k1-5 & 3.k1-5 & 3.k1-5 & 1.k1.2 & 1.k1.2 & 1.k1.2 \\\hline
BNC-H & {\it 3.k2-5} & 3.k4.5 & 3.k4.5 & 3.k4.5 & 1.k4 & 1.k4 \\\hline
BNC-J & 3.k1.3-5 & 3.k1.3-5 & 3.k1.3-5 & 3.k1.3-5 & 1.k2 & 1.k2 \\\hline
BNC-K & 3.k1-5 & 3.k1-5 & 3.k2.4.5 & 3.k2.4.5 & 1.k2 & 1.k2 \\\midrule
UMBC & {\it 3.k1-5} & {\it 3.k1-5} & {\it 1.k1.4} & 1.k2.5 & 1.k2.5 & 1.k2.5 \\\midrule
Catalan & {\it 3.k2-5} & {\it 3.k2-5} & 1.k2.5 & 1.k2.5 & 1.k2.5 & 1.k2.5 \\\hline
Croatian & {\it 3.k1.3-5} & 3.k2-5 & 3.k3-5 & 1.k1.3 & 1.k1.3 & 1.k1.3\\\hline
Czech & {\it 3.k2.4.5} & {\it 3.k2.4.5} & 3.k1.2.5 & 1.k1.3 & 1.k1.3 & 1.k1.3 \\\hline
Danish & {\it 3.k1-5} & 3.k2.4.5 & 3.k2.4.5 & 1.k1.4 & 1.k1.4 & 1.k1.4 \\\hline
Dutch & {\it 3.k1-5} & {\it 3.k1-5} & 1.k2.5 & 1.k2.5 & 1.k2.5 & 1.k2.5 \\\hline
Finnish & {\it 3.k1.2.4.5} & 1.k2.4 & 1.k2.4 & 1.k2.4 & 1.k2.4 & 1.k2.4 \\\hline
Indonesian & 3.k1-5 & 3.k1-5 & 3.k1-5 & 1.k1.3 & 1.k1.3 & 1.k1.3 \\\hline
Lithuanian & {\it 3.k1.3-5} & {\it 3.k2.3.4} & 1.k2.3 & 1.k2.3 & 1.k2.3 & 1.k2.3 \\\hline
Bokm\aa l & 3.k2.3.5 & 3.k2.3.5 & 3.k2.3.5 & 1.k2.5 & 1.k2.5 & 1.k2.5 \\\hline
Nynorsk & {\it 3.k1-5} & {\it 3.k1.2.3.5} & 1.k2.3 & 1.k2.3 & 1.k2.3 & 1.k2.3 \\\hline
Polish & 3.k2-5 & 3.k2-5 & 3.k2-5 & 3.k2-5 & 1.k1.4 & 1.k1.4 \\\hline
Portuguese & 3.k2.3.5 & 3.k2.3.5 & 3.k2.3.5 & 2.k2.5 & 1.k2 & 1.k2 \\\hline
Romanian & {\it 3.k3-5} & 3.k1.3-5 & 3.k1.3-5 & 1.k5 & 1.k5 & 1.k5 \\\hline
Serbian.sh & {\it 3.k1.2.4.5} & 3.k2.4.5 & 3.k2.4.5 & 1.k1.3 & 1.k1.3 & 1.k1.3 \\\hline
Serbian.sr & {\it 3.k1.2.4.5} & {\it 3.k1.2.4.5} & 1.k2.5 & 1.k2.5 & 1.k2.5 & 1.k2.5 \\\hline
Slovak & {\it 3.k2.4.5} & 3.k2-5 & 3.k2-5 & 1.k2.5 & 1.k2.5 & 1.k2.5 \\\hline
Spanish & {\it 3.k1-5} & {\it 3.k1.3-5} & 1.k2.3 & 1.k2.3 & 1.k2.3 & 1.k2.3 \\\hline
Swedish & {\it 1.k1.4} & 1.k2.4 & 1.k2.4 & 1.k2.4 & 1.k2.4 & 1.k2.4 
\end{tabular}
\caption{Optimal models with tolerance for inner noise. Where the fit is
  above inherent noise the results are in {\it italics}}
\label{optimal_w_tol}
}\end{table}

Finally, let us consider the MDL results given in
Table~\ref{optimal_mdl}. These are often (9 out of 29 subcorpora) consistent
with the results obtained using Eq.~\ref{tolerable}, but never with those
obtained without considering inherent noise to be a factor. Remarkably, we
never needed more than 6 bits quantization, consistent with the general
principles of Google's TPUs \citep{Jouppi:2017} and is in fact suggestive of
an even sparser quantization regime than the eight bits employed there.  

For a baseline, we discretized the naive (nonparametric) model in the same
way.  Not only does the quantization equire on the average two bits more, but
we also have to countenance a considerably larger number of parameters to
specify the distribution within inherent noise, so that the random walk model
offers a size savings of at least 95.3\% (BNC-A) to 99.7\% (Polish).

With the random walk model, the total number of bits required for
characterizing the most complex distributions (66 for BNC-A and 60 for
Spanish) appears to be more related to the high consistency (low internal
noise) of these corpora than to the complexity of the length distributions.

\begin{table}[H]{\centering
\setlength\tabcolsep{1pt}
\hspace*{-3mm}\begin{tabular}{l*{6}{|l}}
 \multirow{2}{*}{dataset} & \multicolumn{6}{c}{best parameters for various $n$ values} \\
 & \multicolumn{1}{c|}{1k} & \multicolumn{1}{c|}{10k} & \multicolumn{1}{c|}{100k} & \multicolumn{1}{c|}{1M} & \multicolumn{1}{c|}{1G} & \multicolumn{1}{c}{$\infty$} \\\toprule
BNC-A & {\it 3.k1-5} & {\it 2.k4.5} & 2.k4.5 & 2.k4.5 & 2.k4.5 & 2.k4.5 \\\hline
BNC-B & {\it 3.k2-5} & 3.k2-5 & 1.k2.4.5 & 1.k2.4.5 & 1.k2.4.5 & 1.k2.4.5 \\\hline
BNC-C & 3.k1-5 & 1.k2.5 & 1.k2.5 & 1.k1.2.5 & 1.k2.3.5 & 1.k2.3.5 \\\hline
BNC-D & 3.k1.3-5 & 3.k2-5 & 1.k4.5 & 1.k4.5 & 1.k4.5 & 1.k4.5 \\\hline
BNC-E & {\it 3.k1-5} & 1.k2.4.5 & 1.k2.4.5 & 1.k2.4.5 & 1.k2.4.5 & 1.k2.4.5 \\\hline
BNC-F & 3.k1-5 & 1.k4.5 & 1.k4.5 & 1.k4.5 & 1.k4.5 & 1.k4.5 \\\hline
BNC-G & 3.k2-5 & 1.k2.3 & 1.k2.4.5 & 1.k2.4.5 & 1.k2.4.5 & 1.k2.4.5 \\\hline
BNC-H & {\it 3.k1.3-5} & 1.k3.5 & 1.k2.4.5 & 1.k2.4.5 & 3.k2.3.5 & 3.k2.3.5 \\\hline
BNC-J & 3.k2-5 & 3.k2-5 & 1.k1-5 & 1.k1-5 & 1.k1-5 & 1.k1-5 \\\hline
BNC-K & 3.k1-5 & 1.k2.4 & 1.k2.4.5 & 1.k2.4.5 & 1.k2.4.5 & 1.k2.4.5 \\\midrule
UMBC & {\it 3.k2-5} & 3.k2-5 & 1.k3-5 & 1.k3-5 & 1.k3-5 & 1.k3-5 \\\midrule
Catalan & {\it 3.k1.3-5} & {\it 1.k2.3} & 1.k2.3 & 1.k2.3 & 1.k2.3 & 1.k2.3 \\\hline
Croatian & {\it 3.k2.4.5} & 3.k3-5 & 1.k2.3 & 1.k1.3-5 & 1.k1.3-5 & 1.k1.3-5 \\\hline
Czech & {\it 3.k1-5} & 3.k2.4.5 & 1.k1.3-5 & 1.k1-5 & 1.k1-5 & 1.k1-5 \\\hline
Danish & {\it 3.k1-5} & 3.k1-5 & 1.k3.4 & 1.k3.4 & 1.k3.4 & 1.k3.4 \\\hline
Dutch & {\it 3.k1.2.4.5} & {\it 1.k2.4} & 1.k3-5 & 1.k3-5 & 1.k3-5 & 1.k3-5 \\\hline
Finnish & {\it 3.k1-5} & 1.k3.5 & 1.k3-5 & 1.k3-5 & 1.k3-5 & 1.k3-5 \\\hline
Indonesian & 3.k1.3-5 & 3.k2.3.4 & 1.k2.3.5 & 1.k2.3.5 & 1.k2-5 & 1.k2-5 \\\hline
Lithuanian & {\it 3.k2.3.5} & {\it 3.k2.3.4} & 1.k2.3.4 & 1.k2.3.4 & 1.k2.3.4 & 1.k2.3.4 \\\hline
Bokm\aa l & 3.k1-5 & 1.k2.4 & 1.k1.3.4 & 1.k1-5 & 1.k1-5 & 1.k1-5 \\\hline
Nynorsk & {\it 3.k2-5} & 3.k2-5 & 1.k1.4.5 & 1.k2-5 & 1.k2-5 & 1.k2-5 \\\hline
Polish & 3.k2-5 & 3.k2-5 & 1.k2.4.5 & 1.k2.4.5 & 1.k2.4.5 & 1.k2.4.5 \\\hline
Portuguese & 3.k1-5 & 3.k1-5 & 1.k2.3.4 & 1.k2-5 & 1.k2-5 & 1.k2-5 \\\hline
Romanian & 3.k1.2.4.5 & 3.k2.4.5 & 1.k3.4 & 1.k1-4 & 1.k1-4 & 1.k1-4 \\\hline
Serbian.sh & {\it 3.k1.2.4.5} & 1.k4.5 & 1.k4.5 & 2.k3-5 & 2.k3-5 & 2.k3-5 \\\hline
Serbian.sr & {\it 3.k2.4.5} & 3.k2-5 & 1.k2.3.4 & 1.k1.3-5 & 1.k1.3-5 & 1.k1.3-5 \\\hline
Slovak & {\it 3.k1-5} & 3.k1.3-5 & 1.k3-5 & 1.k3-5 & 1.k3-5 & 1.k3-5 \\\hline
Spanish & {\it 1.k2.3} & {\it 1.k2.3} & 1.k2.3.4 & 1.k2.3.4 & 1.k1-5 & 1.k1-5 \\\hline
Swedish & 3.k5 & 3.k5 & 3.k5 & 3.k5 & 3.k5 & 1.k3
\end{tabular}
\caption{Optimal models without tolerance. Fit above inherent noise in italics.}
\label{optimal_wo_tol}
}\end{table}

\section{Conclusion}
\label{sec:five}

At the outset of the paper we criticized the standard mixture Poisson length
model of Eq.~\ref{mix-poisson} for lack of a clear genesis -- there is no obvious
candidate for `arrivals' or for the mixture. In contrast, our random walk
model is based on the suggestive idea of total valency `number of things you
want to say', and we see some rather clear methods for probing this further. 

First, we have extensive lexical data on the valency of individual words, and
know in advance that e.g. color adjectives will be dependent on nouns, while
relational nouns such as {\it sister} can bring further nouns or
NPs. Combining the lexical knowledge with word frequency statistics is
somewhat complicated by the fact that a single word form may have different
senses with different valency frames, but these cause no problems for a
statistical model that convolves the two distributions.

\begin{table}[H]{\centering
\hspace*{3mm}\begin{tabular}{l|c|c|l|l|r}
 dataset & mq & nq&  tb & opt & \% size\\\toprule
BNC-A & 6 & 7 & 66 & 1.k4.5   & 4.69\\\hline
BNC-B & 4 & 5 & 40 & 2.k2.5   & 4.65\\\hline
BNC-C & 3 & 5 & 36 & 1.k1.2.4   & 3.32\\\hline
BNC-D & 2 & 3 & 6 & 1.k2   & 1.29\\\hline
BNC-E & 4 & 5 & 32 & 1.k2.5   & 3.56\\\hline
BNC-F & 2 & 5 & 16 & 1.k2.5   & 1.13\\\hline
BNC-G & 3 & 5 & 24 & 1.k1.2   & 2.45\\\hline
BNC-H & 2 & 4 & 16 & 1.k2.5   & 1.36\\\hline
BNC-J & 2 & 4 & 6 & 1.k2   & 0.51\\\hline
BNC-K & 2 & 3 & 6 & 1.k2   & 0.63\\\midrule
UMBC & 4 & 7 & 44 & 1.k4.5   & 0.88\\\midrule
Catalan & 5 & 7 & 40 & 1.k2.5   & 0.57\\\hline
Croatian & 4 & 6 & 32 & 1.k2.4   & 0.53\\\hline
Czech & 3 & 6 & 24 & 1.k2.5   & 0.41\\\hline
Danish & 3 & 6 & 24 & 1.k2.4   & 0.41\\\hline
Dutch & 5 & 7 & 40 & 1.k2.5   & 0.57\\\hline
Finnish & 4 & 7 & 48 & 1.k1.2.3   & 0.69\\\hline
Indonesian & 4 & 5 & 32 & 1.k2.4   & 0.66\\\hline
Lithuanian & 4 & 7 & 32 & 1.k2.3   & 0.46\\\hline
Bokm\aa l & 3 & 7 & 30 & 2.k2.5   & 0.43\\\hline
Nynorsk & 4 & 6 & 32 & 1.k2.3   & 1.14\\\hline
Polish & 2 & 5 &  16 & 1.k2.5   & 0.32\\\hline
Portuguese & 3 & 5 & 18 & 1.k4   & 0.36\\\hline
Romanian & 3 & 5 & 24 & 1.k1.2   & 0.48\\\hline
Serbian.sh & 4 & 6 & 32 & 1.k2.4   & 0.53\\\hline
Serbian.sr & 4 & 5 & 32 & 1.k2.5   & 0.64\\\hline
Slovak & 5 & 6 & 40 & 1.k2.3   & 0.67\\\hline
Spanish & 6 & 7 & 60 & 1.k3.4   & 0.86\\\hline
Swedish & 5 & 7 & 40 & 1.k2.3   & 0.57
\end{tabular}
\caption{Optimal models with MDL comparison (with tolerance). mq: Model
  quantization bits. nq: naive/nonparametric quantization bits. tb: total
  bits. opt: optimal model configuration. \%size: size of random walk model as
  percentage of size of nonparametric model.}
\label{optimal_mdl}
}\end{table}

Second, thanks to Universal
Dependencies\footnote{\url{http://universaldependencies.org}} we now have
access to high quality dependency treebanks where the number of dependencies
running between words $w_1, \ldots, w_k$ and $w_{k+1} \ldots w_n$, the $y$
coordinate of our random walk at $k$, can be explicitly tracked. Using these
treebanks, we could perform a far more detailed analysis of phrase or clause
formation than we attempted here.

Third, we can extend the analysis in a typologically sound manner to
morphologically more complex languages.  Using a morphologically analyzed
Hungarian corpus \citep{Oravecz:2014} we measured the per-word morpheme
distribution and per-sentence word distribution.  We found that the random sum
of `number of words in a sentence' independent copies of `number of morphemes
in a word' estimates the per-sentence morpheme distribution within inherent
noise.

Another avenue of research alluded to above would be the study of subject- and
object-control verbs and infinitival constructions, where single nouns or NPs
can fill more than one open dependency. This would complicate the calculations
in \Eqref{condition_on_first_step} in a non-trivial way. We plan to extend our
model in a future work.

One of the authors \citep{Kornai:1992} already suggested that the number of
dependencies open at any given point in the sentence must be subject to
limitations of short-term memory \citep{Miller:1956} -- this may act as a
reflective barrier that keeps asymptotic sentence length smaller than the pure
random walk model would suggest. In particular, Bernoulli and other well-known
models predict exponential decay at the high end, whereas our data shows
polinomial decay proportional to $n^{-C}$, with $C$ somewhere around 4 (in the
$3-5$ range). This is one area where our corpora are too small to draw
reliable conclusions, but overall we should emphasize that corpora already
collected (and in the case of UD treebanks, already analyzed) offer a rich
empirical field for studying sentence length phenomena, and the model
presented here makes it possible to use statistics to shed light on the
underlying grammatico-semantic structure.

\section*{Acknowledgments}

Research partially supported by National Research, Development and Innovation
Office NKFIH grant \#120145 and by National Excellence Programme
2018-1.2.1-NKP-00008: Exploring the Mathematical Foundations of Artificial
Intelligence and
NKFIH grant \#115288: Algebra and algorithms.
A hardware grant from NVIDIA Corporation is gratefully
acknowledged. GNU parallel was used to run experiments \citep{Tange2011a}.

\bibliography{slen}
\bibliographystyle{apalike}

\appendix
\renewcommand{\thesection}{\Alph{section}}

\section{Appendix}

\begin{thm}
Let us define $f$ as $x = \frac{f(x)}{F(f(x))}$ with $F(0) > 0$, then
\begin{equation}
\left[x^i\right]\left(f(x)\right)^k =
    \frac{k}{i}[x^{i-k}]F^i(x)
\end{equation}
\end{thm}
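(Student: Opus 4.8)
The statement is precisely the power form of the Lagrange--Bürmann inversion formula, so the plan is to recognize it as such and then supply the short residue-calculus derivation. First I would rewrite the defining relation $x = f(x)/F(f(x))$ as $f = x\,F(f)$, which exhibits $f$ as the compositional inverse of $g(w) = w/F(w)$. Since the theorem assumes $F(0) > 0$, we have $g(0) = 0$ and $g'(0) = 1/F(0) \neq 0$, so $g$ admits a formal power series inverse $f$ with $f(0)=0$; this is exactly the hypothesis under which the coefficient manipulations below are legitimate.

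Next I would phrase coefficient extraction as a formal residue: $[x^i]\,h(x) = \operatorname{Res}_x\!\big(x^{-i-1}h(x)\big)$, where $\operatorname{Res}_x$ denotes the coefficient of $x^{-1}$. Applying this to $h = f^k$ and substituting $w = f(x)$, i.e. $x = g(w)$ and $\dd x = g'(w)\dd w$, turns the residue into $[x^i]f^k = \operatorname{Res}_w\!\big(w^k\,g(w)^{-i-1}g'(w)\big)$. The substitution is valid in the Laurent-series residue calculus precisely because $g$ is invertible with $g(0)=0$ and $g'(0)\neq 0$, so that the change-of-variables rule $\operatorname{Res}_x(h) = \operatorname{Res}_w\!\big(h(g(w))\,g'(w)\big)$ applies.

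The key algebraic move is integration by parts. Using $g^{-i-1}g' = -\tfrac1i\,(g^{-i})'$ together with the fact that the residue of a derivative vanishes (a formal Laurent series has no $w^{-1}$ term in its derivative), I would transfer the derivative off the pole factor and onto $w^k$, via $\operatorname{Res}_w\!\big(w^k (g^{-i})'\big) = -\operatorname{Res}_w\!\big(k\,w^{k-1} g^{-i}\big)$. This leaves $[x^i]f^k = \tfrac{k}{i}\operatorname{Res}_w\!\big(w^{k-1} g(w)^{-i}\big)$. Substituting $g(w)^{-i} = w^{-i}F(w)^i$ gives $\tfrac{k}{i}\operatorname{Res}_w\!\big(w^{k-1-i}F(w)^i\big) = \tfrac{k}{i}[w^{i-k}]F(w)^i$, which is the claim after renaming the dummy variable $w \to x$.

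I expect the only real obstacle to be bookkeeping rather than substance: justifying the change of variables and the vanishing boundary term with care, which is cleanest when everything is carried out in the ring of formal Laurent series so that integration by parts reduces to the purely algebraic identity $\operatorname{Res}_w\!\big((\,\cdot\,)'\big)=0$. One should also note the degenerate case $i < k$: then $F^i$ has no term of degree $i-k$, so both sides vanish, consistent with $\tau_k \geq k$ holding with probability $1$.
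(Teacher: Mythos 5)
Your proposal is correct, and it does strictly more than the paper: the paper's entire proof is the one-line citation ``By Lagrange--B\"urmann formula with composition function $H(x)=x^k$'', i.e.\ it plugs $H(w)=w^k$ into the classical identity $[x^i]H(f(x))=\frac{1}{i}[w^{i-1}]H'(w)F(w)^i$ and stops, whereas you re-derive that identity itself via the standard residue-calculus argument. Every step in your chain is sound: $x=f/F(f)$ is equivalent to $f=xF(f)$, so $f$ is the compositional inverse of $g(w)=w/F(w)$, which satisfies $g(0)=0$ and $g'(0)=1/F(0)\neq 0$ (the same check the paper performs in Section~2, where $F(0)=p_{-1}$); the change of variables $\operatorname{Res}_x h=\operatorname{Res}_w\bigl(h(g(w))\,g'(w)\bigr)$ is legitimate precisely because $g$ has order exactly one (for order $n$ the rule would acquire a factor $n$); and your integration by parts rests only on the purely algebraic fact that the derivative of a formal Laurent series has no $w^{-1}$ term. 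What the paper's route buys is brevity and an appeal to a citable classical theorem; what yours buys is self-containedness, freedom from analytic convergence questions (everything lives in the ring of formal Laurent series), and explicitness about hypotheses the paper leaves silent --- notably that $i\geq 1$ is required, since your step $g^{-i-1}g'=-\frac{1}{i}\,(g^{-i})'$ divides by $i$, matching the $\frac{k}{i}$ in the statement, and that for $0<i<k$ both sides vanish (the left because $f$ has order one so $f^k$ has order $k$, the right because $F^i$ has no negative-degree terms), consistent with the paper's observation that $\tau_k\geq k$ with probability $1$.
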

\begin{proof}
By Lagrange--Bürmann formula with composition function $H(x) = x^k$.
\end{proof}

\begin{thm}
\label{main_theorem}
In the Bayesian evidence if both the model and parameter a priori is uniform, then
\begin{gather*}
\PP(\HH_i\mid D) = \frac{\PP(D\mid\HH_i)\cdot \PP(\HH_i)}{\PP(D)} \propto 
    f(\mathbf{w}^\ast_i) + \\
    \frac{1}{n}\cdot\ln\Vol(\HH_i) +
    \frac{1}{2n} \ln\det f''(\mathbf{w}^\ast_i) + 
    \frac{d}{2n} \cdot \ln\frac{n}{2\pi}
\end{gather*}
where $f(\mathbf{w}_i)$ is the cross entropy of the measured and the modeled distributions.
See \Eqref{non_fisher_overall}.

If the augmented model (\ref{augmented_model}) is used, then \Eqref{augmented_evidence} follows.
\end{thm}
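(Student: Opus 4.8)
The plan is to obtain both formulas from a single application of Laplace's method to integrals of the form $\int e^{-n f}$, applied once to the plain model and once to the augmented one. First I would put the evidence in exponential form: with the uniform prior $\PP(\mathbf{w}_i\mid\HH_i)=1/\Vol(\HH_i)$ and $f(\mathbf{w}_i)=-\tfrac1n\ln\QQ_{\mathbf{w}_i}(D)=-\sum_{x\in X}p_x\ln\QQ_{\mathbf{w}_i}(x)$, so that $\PP(D\mid\mathbf{w}_i,\HH_i)=\prod_x\QQ_{\mathbf{w}_i}(x)^{n_x}=e^{-n f(\mathbf{w}_i)}$, the evidence is
\begin{equation*}
\PP(D\mid\HH_i)=\frac{1}{\Vol(\HH_i)}\int_{\HH_i}e^{-n f(\mathbf{w}_i)}\dd\mathbf{w}_i .
\end{equation*}

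Next I would localize around $\mathbf{w}^\ast_i=\argmin_{\mathbf{w}_i}f$, where $\nabla f=0$, and expand $f(\mathbf{w})=f(\mathbf{w}^\ast_i)+\tfrac12(\mathbf{w}-\mathbf{w}^\ast_i)^{\top}f''(\mathbf{w}^\ast_i)(\mathbf{w}-\mathbf{w}^\ast_i)+o(\|\mathbf{w}-\mathbf{w}^\ast_i\|^2)$. The Gaussian integral then contributes $(2\pi/n)^{d/2}(\det f''(\mathbf{w}^\ast_i))^{-1/2}$, giving $\PP(D\mid\HH_i)\approx\Vol(\HH_i)^{-1}e^{-n f(\mathbf{w}^\ast_i)}(2\pi/n)^{d/2}(\det f''(\mathbf{w}^\ast_i))^{-1/2}$. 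Taking $-\tfrac1n\ln$ of the evidence — under which the factors $\PP(\HH_i)$ and $\PP(D)$ contribute only $i$-independent constants and hence do not change which model is optimal — and subtracting the data entropy $\ln 2\cdot\H(D)$ so that $f(\mathbf{w}^\ast_i)$ reads as a relative entropy, reproduces exactly the four terms of \Eqref{non_fisher_overall}, which is the first assertion.

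For the augmented model I would first perform the inner minimization over $\mathbf{q}$ in closed form. Splitting $\overline f(\mathbf{w}_i,\mathbf{q})=-\sum_x p_x\ln\overline{\QQ}_{\mathbf{w}_i,\mathbf{q}}(x)$ over $X\cap\supp(\HH_i)$ and $X\setminus\supp(\HH_i)$, the covered part equals $-\lambda\ln\lambda-\sum_{x\in X\cap\supp(\HH_i)}p_x\ln\QQ_{\mathbf{w}_i}(x)$, while all $\mathbf{q}$-dependence sits in $-\sum_{x\notin\supp(\HH_i)}p_x\ln q_x$ under $\sum q_x=1$. By Gibbs' inequality this is minimized at $q^\ast_x=p_x/(1-\lambda)$, an interior point of the simplex; substituting it back, the factor $1-\lambda$ cancels inside the logarithm and the uncovered part collapses to $-\sum_{x\notin\supp(\HH_i)}p_x\ln p_x$. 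Subtracting $\ln 2\cdot\H(D)$ then annihilates this last sum and turns the covered part into $-\lambda\ln\lambda+\sum_{x\in X\cap\supp(\HH_i)}p_x\ln\frac{p_x}{\QQ_{\mathbf{w}^\ast_i}(x)}$, which is precisely $\gKL$ of the empirical and modelled distributions — explaining why Definition~\ref{generalized_kl_def} is advertised as motivated by this theorem.

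Finally I would run the same Laplace expansion on the joint integral over $(\mathbf{w}_i,\mathbf{q})$ of dimension $d'$. Since $\lambda$ is fixed by the data, the $\mathbf{w}_i$ and $\mathbf{q}$ dependencies never mix, so the integral factorizes (yielding $\Vol(\HH_i)\cdot\Vol(\text{aux.\ model})$), the Hessian is block diagonal (so $\ln\det$ splits into the model and auxiliary determinant terms), and the exponent count becomes $d'$; collecting these gives \Eqref{augmented_evidence}. The step I expect to be the real obstacle is justifying the interior Laplace expansion itself: the parameter domains are constrained probability simplices, so I would have to argue that $\mathbf{w}^\ast_i$ is a unique, non-degenerate interior minimizer (otherwise the $(\det f'')^{-1/2}$ factor and the block-diagonal splitting both need boundary corrections, and lower-dimensional faces — such as order-1 models sitting inside order-3 space — must be treated as models in their own right with the correct $d$), and to control the $o(\|\mathbf{w}-\mathbf{w}^\ast_i\|^2)$ remainder uniformly as $n\to\infty$. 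Granting a non-degenerate interior minimum, the remaining estimates are routine.
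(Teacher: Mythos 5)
Your proposal is correct and follows essentially the same route as the paper's proof: rewrite the evidence as $\frac{1}{\Vol(\HH_i)}\int e^{-nf(\mathbf{w}_i)}\dd\mathbf{w}_i$, apply Laplace's method, take $-\frac1n\ln(\bullet)$, and for the augmented model use the optimal auxiliary parameters $q^\ast_x = p_x/(1-\lambda)$ together with the block-diagonal Hessian and product volume, with the entropy subtraction producing the $\gKL$ form. Your explicit Gibbs'-inequality derivation of $q^\ast_x$ and your caveat about needing a non-degenerate interior minimizer for the Laplace expansion are welcome refinements of steps the paper merely asserts, but they do not constitute a different approach.
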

\begin{proof}
\begin{align*}
\PP(D\mid \HH_i) & \stackrel{\text{uniform a priori}}{=} \\
   & \int \PP(D\mid \mathbf{w}_i, \HH_i)\cdot \frac{1}{\Vol(\HH_i)}\dd \mathbf{w}_i = \\
   & \frac{1}{\Vol(\HH_i)}\cdot \int \prod_{x\in X} \QQ_{\mathbf{w}_i}(x)^{n_x}\dd \mathbf{w}_i =
\end{align*}
\[
\frac{\displaystyle\int \exp\Big\{-n\cdot \overbrace{\left(-\sum_{x\in X} \frac{n_x}n \cdot \ln \QQ_{\mathbf{w}_i}(x)\right)}^{f(\mathbf{w}_i)}\Big\}\dd \mathbf{w}_i}{\Vol(\HH_i)}
\]
Using Laplace method:
\begin{align*}
   \approx
   \frac{1}{\Vol(\HH_i)}\cdot e^{-n\cdot f(\mathbf{w}^\ast_i)} \cdot
                                \frac{\left(\frac{2\pi}{n}\right)^{\frac{d}{2}}}
                                {\sqrt{\det f''(\mathbf{w}^\ast_i)}}
\end{align*}
Taking $-\frac1n \ln (\bullet)$ for scaling (does not effect the relative order of the models):
\begin{align*}
   \frac1n\ln\Vol(\HH_i) + f(\mathbf{w}^\ast_i) + \frac{1}{2n} \ln\det f''(\mathbf{w}^\ast_i) + \\
   \frac{d}{2n}\cdot \ln\left(\frac{n}{2\pi}\right)
\end{align*}

As for the augmented model, the model parameters are the concatenation of the original parameters and the auxiliary parameters.
Thus the overall Hessian is the block-diagonal matrix of the original and the auxiliary Hessian.
Similarly, the overall model volume is the product of the original and the auxiliary volume.
Trivially, the logarithm of product is the sum of the logarithms.

Since the auxiliary model can fit  the uncovered part perfectly: $p_x = (1-\lambda)\cdot q_x$ on $x\notin \supp\HH_i$.
See (\ref{augmented_model}) for that $\lambda$ is the covered probability of the sample.
\begin{gather}
\nonumber
\PP(D\mid \HH'_i) = 
 -\sum_{x\in X\setminus\supp(\HH_i)} p_x\cdot \ln p_x \\
 \nonumber
-\sum_{x\in X\cap\supp(\HH_i)} p_x\cdot \ln \left(\lambda\cdot \QQ_{\mathbf{w}^\ast_i}(x)\right) + \\
\nonumber \frac{1}{n} \cdot \left( \ln\Vol(\HH_i)+\ln\Vol(\text{aux. model})\right) + \\
\nonumber \frac{1}{2 n} \cdot \ln\det\left(\text{model Hessian}\right) + \\
\nonumber  \frac{1}{2 n} \cdot  \ln\det\left(\text{aux. model Hessian}\right) + \\
\label{augmented_evidence_proof}
 \frac{d'}{2n} \cdot \ln\frac{n}{2\pi}
\end{gather}
where $d'$ is the overall parameter number.

Further, if one subtracts the entropy of the sample then
only the first two term is changed compared to \Eqref{augmented_evidence_proof}
and Eq.~\ref{augmented_evidence} follows.
\begin{gather*}
\nonumber \sum_{x\in X} p_x\cdot \ln p_x -\sum_{x\in X\setminus\supp(\HH_i)} p_x\cdot \ln p_x \\
-\sum_{x\in X\cap\supp(\HH_i)} p_x\cdot \ln \left(\lambda\cdot \QQ_{\mathbf{w}^\ast_i}(x)\right)  = \\
\sum_{x\in X\cap\supp(\HH_i)} p_x\cdot \ln \frac{p_x}{\lambda\cdot \QQ_{\mathbf{w}^\ast_i}(x)} = \\
\sum_{x\in X\cap\supp(\HH_i)} p_x\cdot \left(\ln \frac{p_x}{\QQ_{\mathbf{w}^\ast_i}(x)} + \ln\frac{1}{\lambda}\right) = \\
\lambda\cdot (-\ln\lambda) +
\sum_{x\in X\cap\supp(\HH_i)}p_x\cdot \ln \frac{p_x}{\QQ_{\mathbf{w}^\ast_i}(x)}
\end{gather*}
q.v. Definition~\ref{generalized_kl_def}.
\end{proof}

\end{document}